\newtheorem{theorem}{Theorem}[section]
\begin{document}

\title{Barrier functions enable safety-conscious force-feedback control
\thanks{C. Dawson is supported by the NSF GRFP under Grant No. 1745302.}
}

\author{\IEEEauthorblockN{Charles Dawson}
\IEEEauthorblockA{\textit{Aeronautics and Astronautics} \\
\textit{MIT}, Cambridge, USA\\
\texttt{cbd@mit.edu}}
\and
\IEEEauthorblockN{Austin Garrett, Falk Pollok, Yang Zhang}
\IEEEauthorblockA{\textit{MIT-IBM Watson AI Lab} \\
\textit{IBM}, Cambridge, USA \\
{\small \texttt{falk.pollok,austin.garrett,yang.zhang2}\texttt{@ibm.com}}}
\and
\IEEEauthorblockN{Chuchu Fan}
\IEEEauthorblockA{\textit{Aeronautics and Astronautics} \\
\textit{MIT}, Cambridge, USA \\
\texttt{chuchu@mit.edu}}
}

\maketitle

\begin{abstract}
In order to be effective partners for humans, robots must become increasingly comfortable with making contact with their environment. Unfortunately, it is hard for robots to distinguish between ``just enough'' and ``too much'' force: some force is required to accomplish the task but too much might damage equipment or injure humans. Traditional approaches to designing compliant force-feedback controllers, such as stiffness control, require difficult hand-tuning of control parameters and make it difficult to build safe, effective robot collaborators. In this paper, we propose a novel yet easy-to-implement force feedback controller that uses control barrier functions (CBFs) to derive a compliant controller directly from users' specifications of the maximum allowable forces and torques. We compare our approach to traditional stiffness control to demonstrate potential advantages of our control architecture, and we demonstrate the effectiveness of our controller on an example human-robot collaboration task: cooperative manipulation of a bulky object.
\end{abstract}

\begin{IEEEkeywords}
safe control, human-robot collaboration, control barrier functions
\end{IEEEkeywords}

\section{Introduction \& Related Work}

Robots have a love-hate relationship with contact. On the one hand, substantial research effort has been invested over the years to develop algorithms that allow robots to \textit{avoid} colliding with nearby humans and obstacles~\cite{lavallePlanningAlgorithms2006,dawson22perception,singletary_cbf_arm}. On the other hand, robots must eventually make contact with their environment in order to be useful. Robots cannot help with household chores or manufacturing tasks without the ability to touch their environment.

That said, we cannot embrace the idea of robots intentionally making contact with their environment without answering important safety questions. For instance, a certain amount of force is required to complete most useful tasks, but too much force can cause damage to the humans or objects on the receiving end of the interaction. Over the years, roboticists have approached these questions in a variety of ways~\cite{manipulation}, but perhaps the most successful is the family of strategies known as ``stiffness''~\cite{singletary_cbf_arm} or ``impedance'' control~\cite{hogan_impedance_control,hogan_survey}. These controllers are conceptually simple but have seen great success; the core idea is to replace (via feedback control) the complex dynamics of a robotic arm with those of a much simpler mechanical system, such as a spring-mass-damper~\cite{manipulation}. As a result, when a human pushes on the robot it will respond in a predictable and physically intuitive way, and the force the robot exerts in response starts small and only increases gradually as it is pushed away from some target pose.

\begin{figure}[tb]
    \centering
    \includegraphics[width=\linewidth]{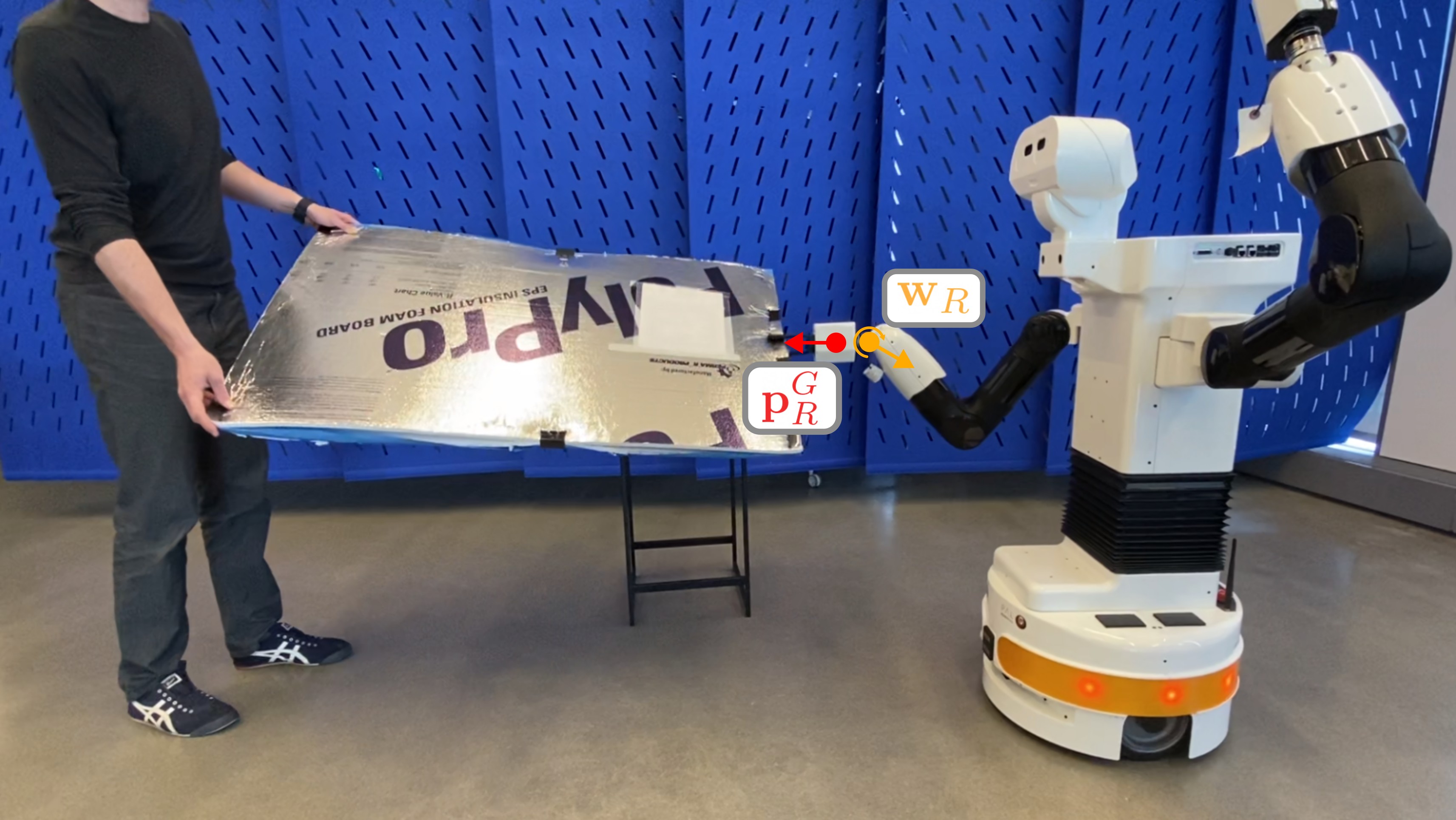}
    \caption{A human-robot collaboration task that motivates our work, in which the human and robot must work together to move a large object. The challenge of this task lies in making the robot responsive to feedback from the human, which comes in the form of forces and torques exerted on the object.}
    \label{fig:task}
\end{figure}

Stiffness controllers have a number of appealing theoretical and practical properties~\cite{hogan_survey}, but they can be difficult to tune for practical use. For example, what is the appropriate stiffness to use when carrying an object? If the object is particularly heavy, then a large stiffness would be required, but this large stiffness could lead to large and potentially-unsafe motions if the object is dropped accidentally, where a large stiffness would cause the robot's arm to violently spring back to its equilibrium position. Tuning the stiffness and damping parameters of these controllers is non-trivial (and becomes more difficult if the user decides to incorporate nonlinear stiffness and damping laws), particularly if one set of parameters must be chosen to perform safely in a range of interactions~\cite{manipulation,Villani2016}. This tuning problem becomes even more complicated when humans enter the picture; some works attempt to estimate (at run-time) approximate mechanical models for the human, but it is difficult to construct an approximate model of human manipulation~\cite{cos_adaptive_cooperative_control}.

In this paper, we propose an alternate approach to programming robots for safe, compliant contact. Instead of specifying the desired behavior \textit{by analogy}, as stiffness control does by specifying the parameters of a spring-mass-damper system that the robot attempts to imitate, we allow the user to specify the desired behavior \textit{directly} by constraining the maximum force the robot is allowed to exert on its environment. To impose these constraints, we develop a force feedback controller based on control barrier functions (CBFs), which allow us to directly specify a set of safe and unsafe behaviors and derive a controller that is guaranteed to respect these constraints. This controller is simple to implement and tune, and we demonstrate its performance on an example task where a human and robot must collaboratively lift and maneuver a large object.

\subsection{Contributions}

Our main contribution in this paper is a novel architecture for force-feedback control based on control barrier functions. Our proposed controller is simple to implement and easy to tune, allowing the user to proceed directly from a specification of the robot's behavior (in terms of desired pose and maximum allowable force) to a compliant controller. Our controller requires no sensor feedback other than joint positions and force-torque measurements at the end effector, it is compatible with commonly-available robotic hardware, and it does not require any system identification for either the robot or its environment. We compare the performance of our controller with that of stiffness control to highlight the advantages and drawbacks of our approach, and we showcase the effectiveness of our controller in programming a robot to carry a bulky object collaboratively with a human. We hope that this novel, easy-to-use force feedback controller will be a useful building block for future robots to collaborate more safely and efficiently with humans.

\section{Problem Statement}\label{problem}

In this work, we develop a force-feedback controller that enables responsive, intuitive, and safe human-robot interaction. We motivate this work using the collaborative carrying task shown in Fig.~\ref{fig:task}, inspired by construction tasks, where a human and a robot must work together to manipulate a large object. The robot does not receive explicit information about the desired pose of the object, but it receives feedback from the human in the form of forces and torques exerted on the object (measured using a force-torque sensor in the robot's wrist).

This task highlights three requirements that drive the design of our force-feedback controller. First, the controller must be \textit{safe}: it must not cause the robot to exert too much force on the object, since excessive force could damage to the object, cause the robot's or human's grasp to fail, or push the human off balance. Second, the controller must be \textit{responsive}; since the robot has no explicit knowledge of where the human intends to move the object, it must be able to react when the human signals (by applying a force to the object) their intent to move in a certain direction. Finally, the controller must be \textit{stable}: in the absence of any force feedback from the human, the robot should hold the object level and avoid drifting or oscillation.

In traditional approaches to force-feedback control, e.g. stiffness or impedance control, these requirements are considered implicitly by replacing the dynamics of the robot's end effector with those of a spring-mass-damper surrogate and tuning the parameters of that surrogate to achieve empirically good performance. Our goal in this paper is to eliminate tedious tuning of stiffness parameters and instead derive a controller directly from the safety, responsiveness, and stability requirements. Our key insight is that satisfying the safety requirement (i.e. avoiding applying excess force on the carried object) will naturally lead to responsive behavior. When the human pushes on the object, signaling an intent to move in some direction, they cause the robot to approach its force limits, prompting it to respond appropriately; a \textit{safe} collaborator is thus necessarily a \textit{responsive} one. The rest of this section will provide a formal problem statement of the safety and stability properties that give rise to an easy-to-implement responsive force-feedback controller described in Section~\ref{approach}.

To simplify the presentation, we will derive our controller in task space (considering the Cartesian position, velocity, and wrench at the end effector), but a similar controller could be developed in joint space with suitable assumptions on the rank of the Jacobian. Let us denote by $\mathbf{w}_R = [\mathbf{f}, \mathbf{\tau}]$ the wrench measured at the end effector and expressed in the robot's base frame, comprising the concatenation of the force $\mathbf{f}$ and torque $\mathbf{\tau}$ vectors. Let $\mathbf{p}_R^G$ and $\mathbf{V}_R^G$ denote the pose and spatial velocity of the gripper in the robot's base frame, respectively, and let us denote by $\hat{\mathbf{p}}_R^G$ the desired end effector pose, which the robot will attempt to track in the absence of any force input from the human (e.g. to keep the carried object level). We assume the ability to command $\mathbf{V}_R^G$ directly, which in practice requires a) access to an underlying differential inverse kinematics~\cite{manipulation} or whole-body controller~\cite{wbc_paper} and b) that the robot is not operating near its joint limits or any kinematic singularity. Appropriate controllers are available for most commercially-available robots (e.g. the PAL Robotics TIAGo~\cite{Pags2016TIAGoTM}, which we use for our experiments, ships with an suitable whole-body controller), and the stability requirement encourages the robot to avoid joint limits and singularities.

Given this notation, we can formalize the \textit{safety} and \textit{stability} requirements as follows.
\begin{itemize}
    \item \textit{Safety}: the robot should respect user-specified force and torque limits; i.e., $\forall t$, $|f_i| \leq f_{i,\ max}$ and $|\tau_i| \leq \tau_{i,\ max}$ for $i \in \{x, y, z\}$. For many tasks, including the collaborative carrying task shown in Fig.~\ref{fig:task}, the limit for each axis can be estimated in a straightforward manner from the task specification (we provide an example of this in our experiments in Section~\ref{experiments}).
    \item \textit{Stability}: if it is possible to do so without violating the safety constraint, the robot should attempt to track the desired end effector pose; i.e., $\norm{\mathbf{p}_R^G - \hat{\mathbf{p}}_R^G} \to 0$ so long as the safety constraint is not active.
\end{itemize}

In the next section, we introduce a novel force-feedback controller based on control barrier functions (CBFs, for respecting the safety constraint) and control Lyapunov functions (CLFs, to encode the stability constraint); this controller is easy to implement and does not require us to estimate the impedance of the gripper or its coupling to the environment.

\section{Approach}\label{approach}

A common approach to constructing force-feedback controllers, stiffness and impedance control, involves replacing the dynamics of the robot with those of a simple damped spring and then carefully tuning the stiffness and damping parameters by hand until the safety and stability requirements are met~\cite{manipulation,Villani2016}. These methods approach the safety and stability requirements indirectly. In contrast, in this paper we take a \textit{requirements-first} approach to constructing a force-feedback controller that explicitly models the safety and stability requirements and acts accordingly.

To develop this controller, we turn to two powerful methods from control theory: control barrier functions (CBFs, for safety) and control Lyapunov functions (CLFs, for stability). CBFs and CLFs are certificate functions that allow a controller to not only find safe and stable control actions but also prove that those actions are safe and stable. We will begin by briefly introducing the theory of CBFs and CLFs and then demonstrate how they can be combined into a single force-feedback controller.

Consider a generic continuous-time dynamical system $\dot{x} = f(x, u)$, and suppose we are interested in imposing a safety constraint defined by the sub-level set of a function $h(x) \leq 0$ (i.e. all safe states result in a negative value of $h$ and all unsafe states result in a positive value of $h$) and a stability constraint about $x = 0$. We say that $h$ is a CBF if, for all $x$ and some $\alpha > 0$ there exists a $u$ such that
\begin{equation}
    \nabla_x h(x) \dot f(x, u) \leq -\alpha h(x) \label{eq:generic_cbf_condition}
\end{equation}

If $h$ is in fact a CBF (and a $u$ satisfying~\eqref{eq:generic_cbf_condition} always exists), then any $u$ that satisfies~\eqref{eq:generic_cbf_condition} is guaranteed to respect the safety constraint; i.e. if $h < 0$, then $h$ will never be allowed to become positive (safe states will never be driven into unsafe states), and if $h > 0$ then it must become negative exponentially quickly (unsafe states must be driven into safe states~\cite{ames_cbf,ferraguti_cbf_survey}). Just as a CBF lets us enforce a safety constraint on our controller, CLFs allow us to enforce stability constraints: if we can find some function $V(x)$ such that $V(\bar{x}) = 0$, $V(x \neq \bar{x}) > 0$, and for all $x \neq \bar{x}$ and some $\lambda > 0$ there exists a $u$ such that
\begin{equation}
    \nabla_x V(x) \dot f(x, u) \leq -\lambda V(x) \label{eq:generic_clf_condition}
\end{equation}
then we say that $V$ is a CLF and any $u$ that satisfies~\eqref{eq:generic_clf_condition} will cause the system to exponentially stabilize around the desired state $x \to \bar{x}$~\cite{ames_cbf,ferraguti_cbf_survey}. We can combine these two certificates into a single controller that respects both safety and stability requirements by solving an optimization problem for $u$:
\begin{subequations}
\begin{align}
    \min_{u, \gamma} \quad& \norm{u}^2 + k\gamma \label{eq:cbf_clf_qp}\\
  \text{s.t.} \quad& \nabla_x h(x) f(x, u) \leq -\alpha h(x) \label{eq:cbf_constraint}\\
              & \nabla_x V(x) f(x, u) \leq -\lambda V(x) + \gamma \label{eq:clf_constraint}\\
              & \gamma \geq 0 \label{eq:gamma_constraint}
\end{align}
\end{subequations}
As discussed in Section~\ref{problem}, the robot may temporarily ignore the stability requirement if necessary to preserve safety; the additional decision variable $\gamma$ allows the controller to relax the CLF condition~\eqref{eq:clf_constraint} governing stability as needed to satisfy the safety-critical CBF condition~\eqref{eq:cbf_constraint}. When the system has control-affine dynamics (i.e. $f(x, u)$ is affine in $u$), then this optimization problem takes the form of a quadratic program (QP) and can be solved efficiently online~\cite{dawson2021safe,ames_cbf}. QP-based CBF/CLF controllers like this have emerged in recent years as powerful tools for deriving feedback controllers directly from safety and stability constraints~\cite{ames_cbf,ferraguti_cbf_survey,singletary_cbf_arm,dawson2021safe,dawson22perception,dawson2022survey}.

Next, we must adapt this CBF/CLF framework to design a controller that responds safely to forces applied to the robot's end-effector. Recall that we are concerned with controlling the velocity $\mathbf{V}_R^G$ of the robot's end-effector. This leaves us with a simple continuous-time control-affine dynamical system involving the end-effector pose $\mathbf{p}_R^G$ (for notational convenience, both linear and angular velocities are concatenated into a single vector $\mathbf{V}_R^G$): $\der{}{t} \mathbf{p}_R^G = \mathbf{V}_R^G$.

Deriving a CLF for this system is straightforward; it is easy to show that $V(x) = \frac{1}{2}\norm{\mathbf{p}_R^G - \hat{\mathbf{p}}_R^G}^2$ is a valid control Lyapunov function for these dynamics. Deriving a CBF is slightly more involved and requires some assumptions about the nature of the robot's contact interactions with its environment. Let us model the contact wrench between the robot and its environment using a simple linear stiffness model $\mathbf{w}_R = -\mathbf{K} (\mathbf{p}_R^G - \mathbf{p}_R^{G_0})$, where $\mathbf{p}_R^{G_0}$ is the pose at which contact is made and $\mathbf{K}$ is an \textit{unknown} stiffness matrix which we assume to have no coupling between the various degrees of freedom (i.e. $\mathbf{K}$ is a matrix with unknown positive diagonal entries and zero off-diagonal entries, as in~\cite{Villani2016}). In our experiments, we will show that despite the simplicity of this contact model, our system is nevertheless able to remain safe; moreover, we are able to achieve this safe performance \textit{without explicitly estimating the parameters of $\mathbf{K}$}, as we show in the following theorem.

\begin{theorem}\label{thm:cbf}
Recall that our safety requirement involved bounding the magnitude of the wrench along each degree of freedom (i.e. $|w_i| \leq w_{i, max}$ for $i \in \set{fx, fy, fz, \tau x, \tau y, \tau z}$). We can exploit the fact that our control over each degree of freedom (DoF) is decoupled and define a corresponding set of six CBFs, one for each DoF:
\begin{align}
    h_i(x) = \frac{1}{2}\pn{w_{i}^2 - w_{i, max}^2};\quad i \in \set{fx, fy, fz, \tau x, \tau y, \tau z} \label{eq:contact_cbf}
\end{align}

Each individual $h_i(x)$ is a valid CBF; moreover, we can find a single control input $\mathbf{V}_R^G$ that satisfies all of these CBFs \textit{without} any knowledge of the stiffness matrix $\mathbf{K}$.
\end{theorem}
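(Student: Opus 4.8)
The plan is to differentiate each $h_i$ with respect to time under the end-effector dynamics $\der{}{t}\mathbf p_R^G = \mathbf V_R^G$ and observe that the unknown stiffness enters only through a strictly positive multiplicative factor, which can be folded into the class-$\mathcal K$ gain of the CBF condition and therefore never needs to be known. Writing $p_i$ for the pose coordinate associated with degree of freedom $i$ and $p_{0,i}$ for the corresponding coordinate of the contact pose $\mathbf p_R^{G_0}$, the diagonal stiffness model gives $w_i = -K_i(p_i - p_{0,i})$ with $K_i > 0$, so $\dot w_i = -K_i (\mathbf V_R^G)_i$ and
\[
  \nabla_x h_i(x)\, f(x,\mathbf V_R^G) = \der{h_i}{t} = w_i \dot w_i = -K_i\, w_i\, (\mathbf V_R^G)_i .
\]
The first claim is then immediate for \emph{any} fixed $\alpha > 0$: if $h_i(x) \le 0$, choosing $(\mathbf V_R^G)_i = 0$ makes the left-hand side zero, which is $\le -\alpha h_i(x)$; and if $h_i(x) > 0$ then $|w_i| > w_{i, max} > 0$, so $w_i \ne 0$ and $(\mathbf V_R^G)_i = \alpha h_i(x)/(K_i w_i)$ attains equality in~\eqref{eq:generic_cbf_condition}. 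Hence each $h_i$ is a valid CBF.

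For the remaining claims I would have the controller enforce, for each $i$, the surrogate constraint
\[
  w_i\, (\mathbf V_R^G)_i \;\ge\; \tilde\alpha_i\, h_i(x), \qquad \tilde\alpha_i > 0 \text{ a user-chosen gain},
\]
which involves only the measured wrench component $w_i$ and the specified limit $w_{i, max}$, so no entry of $\mathbf K$ appears in it. Multiplying through by the unknown positive constant $K_i$ shows that this is precisely the CBF condition $\nabla_x h_i\, f \le -\alpha_i h_i$ with effective gain $\alpha_i := K_i\tilde\alpha_i > 0$; since $K_i$ is a fixed constant, so is $\alpha_i$, and the comparison argument recalled in Section~\ref{approach} then guarantees forward invariance of $\{h_i \le 0\} = \{|w_i| \le w_{i, max}\}$ whatever the numerical value of $K_i$. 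Finally, a single $\mathbf V_R^G$ meets all six surrogate constraints simultaneously because $\mathbf K$ is diagonal, so constraint $i$ restricts only the scalar $(\mathbf V_R^G)_i$: each is individually feasible (take $(\mathbf V_R^G)_i = \tilde\alpha_i h_i(x)/w_i$ when $w_i \ne 0$; when $w_i = 0$ we have $h_i(x) = -\frac12 w_{i, max}^2 \le 0$ and any value works), hence the six are jointly feasible, and they remain so alongside the relaxed CLF constraint~\eqref{eq:clf_constraint} since $\gamma$ in~\eqref{eq:cbf_clf_qp} is unbounded above, so the QP is always solvable.

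The main obstacle --- essentially the only nontrivial point --- is to state precisely why ignorance of $\mathbf K$ does no harm: one must show that the unknown stiffness enters the CBF inequality only as a positive scalar that can be absorbed into the class-$\mathcal K$ gain without weakening forward invariance, and one must separately invoke the two structural features of the contact model, namely that $\mathbf K$ is diagonal (so the per-DoF constraints decouple) and has strictly positive diagonal (so the sign of the map $(\mathbf V_R^G)_i \mapsto \dot w_i$ is known and $\alpha_i > 0$). Minor points to dispatch along the way are the boundary case $w_i = 0$ and the degenerate case $w_{i, max} = 0$, both handled by the observation that $w_i = 0 \Rightarrow h_i(x) \le 0$, along with a remark that the torque/orientation degrees of freedom are treated in the same flattened Cartesian fashion used elsewhere in the paper.
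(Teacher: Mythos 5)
Your proof follows essentially the same route as the paper's: compute $\dot h_i = -k_i w_i v_i$ under the diagonal stiffness model, observe that the unknown positive $k_i$ can be absorbed into the class-$\mathcal{K}$ gain (your surrogate constraint $w_i v_i \ge \tilde\alpha_i h_i$ is exactly the paper's $-w_i v_i \le -\alpha' h_i$ with $\alpha = k_i\alpha'$), and use the diagonality of $\mathbf{K}$ to argue the six per-DoF constraints decouple and can be satisfied jointly. Your treatment is somewhat more careful than the paper's (explicit feasibility case analysis at $h_i \le 0$ versus $h_i > 0$, and the boundary case $w_i = 0$), but the substance is the same.
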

\begin{proof}
First, let us prove that each $h_i$ is a valid CBF by showing that Eq.~\eqref{eq:generic_cbf_condition} is satisfiable for each DoF. Let $k_i$ denote the $i$-th diagonal entry of $\mathbf{K}$, and let $v_i$ denote the $i$-th component of $\mathbf{V}_R^G$.
\begin{align}
    \nabla_x h_i(x) \dot f(x, u) &\leq -\alpha h_i(x) \nonumber\\
    -w_i k_i v_i &\leq -\alpha h_i(x) \nonumber \\
    -w_i v_i &\leq -\frac{\alpha}{k_i} h_i(x) \label{eq:contact_cbf_condition}
\end{align}

Thus, any $v_i$ that satisfies $-w_i v_i \leq -\alpha' h(x)$ for some $\alpha' \geq 0$ will also satisfy the CBF condition \eqref{eq:contact_cbf_condition} with $\alpha = k_i\alpha'$. Since $k_i$ is unknown but assumed to be positive, $k_i \alpha' > 0$, so the safety properties of the CBF will still hold. As a result, it is possible to choose the control input for each DoF $v_i$ without any knowledge of the stiffness parameters. Moreover, since the control inputs for each DoF are uncoupled, it is sufficient to satisfy~\eqref{eq:contact_cbf_condition} individually for each DoF and then concatenate the resulting $v_i$ into a single control input $\mathbf{V}_R^G$.
\end{proof}

By exploiting the decoupling between the various degrees of freedom at the end effector, we can thus create a simple QP-based controller that explicitly respects both the safety and stability requirements of our problem:
\begin{subequations}
\begin{align}
        \min_{\gamma\geq 0, \mathbf{V}_R^G} \quad& \norm{\mathbf{V}_R^G}^2 + k\gamma \label{eq:contact_cbf_clf_qp}\\
  \text{s.t.} \quad& -w_i k_i v_i \leq -\frac{\alpha}{2}\pn{w_{i}^2 - w_{i, max}^2}; \quad \text{(CBF \textit{i})} \label{eq:contact_cbf_constraint}\\
             & \quad\qquad i \in \set{fx, fy, fz, \tau x, \tau y, \tau z} \nonumber \\
              & \pn{\mathbf{p}_R^G - \hat{\mathbf{p}}_R^G}\mathbf{V}_R^G \leq -\frac{\lambda}{2} \norm{\mathbf{p}_R^G - \hat{\mathbf{p}}_R^G}^2 + \gamma \quad \text{(CLF)} \label{eq:contact_clf_constraint}
\end{align}
\end{subequations}

This controller is parameterized only by the user-specified limits on the wrench in each axis, the desired end effector pose, and the CBF and CLF parameters $\alpha$ and $\lambda$. We find that tuning $\alpha$ and $\lambda$ is typically not necessary ($\alpha = \lambda = 1$ is a sensible default); these parameters do not affect the safety of the system and govern only how fast the system may allow the wrench to increase and how fast it should converge towards the desired pose (in contrast to the parameters of a traditional stiffness controller, which can be difficult to tune and have a direct impact on the safety and stability of the resulting controller).

\section{Experiments}\label{experiments}

To demonstrate the effectiveness of our force-limiting CBF controller in hardware, we first validate the performance and safety of the controller in simple experiments before highlighting the use of our controller to solve the collaborative carrying task from Fig.~\ref{fig:task}. In all hardware experiments, we implement our controller using Python and Casadi~\cite{Andersson2019} to solve the QP in~\eqref{eq:contact_cbf_clf_qp}. We use a 2-armed TIAGo mobile robot developed by PAL Robotics~\cite{Pags2016TIAGoTM} for our experiments, which runs ROS middleware and provides 6-DoF wrench measurements at the wrist of each arm and control of the end-effector pose via an onboard whole body kinematic controller. Before running our controller, we measure the gravitational wrench on the end-effector and subtract this from the measured wrench to estimate $\mathbf{w}_R$. We deploy our controller on the robot's onboard computer at \SI{30}{Hz}.

\subsection{Validation of the CBF force controller}

To validate the performance of our CBF-based force controller, we a simple experiment where the robot is commanded to maintain a pose $\hat{\mathbf{p}}_R^G = [0.5, -0.3, 1, 0, 0, 0]$ (for readability, we write poses as $[x, y, z, roll, pitch, yaw]$, but our implementation relies on quaternions to represent rotations). We then hang a weighted bag from the robot's end effector to induce a wrench and observe the robot's response.

We attach the bag so that it induces an approximately \SI{20}{N} force in the $-z$ direction. To isolate the controller's response in linear degrees of freedom, we set the maximum allowable torques $\tau_{x, max} = \tau_{y, max} = \tau_{z, max} = \SI{10}{Nm}$, larger than the torque exerted by the test mass, and set the maximum allowable force in all linear directions to be \SI{25}{N}. We also set parameters $\lambda = 10$, $\alpha = 1$, and $k = 1$. We then start the controller, add the \SI{20}{N} load, wait several seconds, and then add an additional \SI{5}{N} load to the bag. The test setup is shown in Fig.~\ref{fig:linear_test_setup}.

\begin{figure}[tb]
    \centering
    \includegraphics[width=\linewidth]{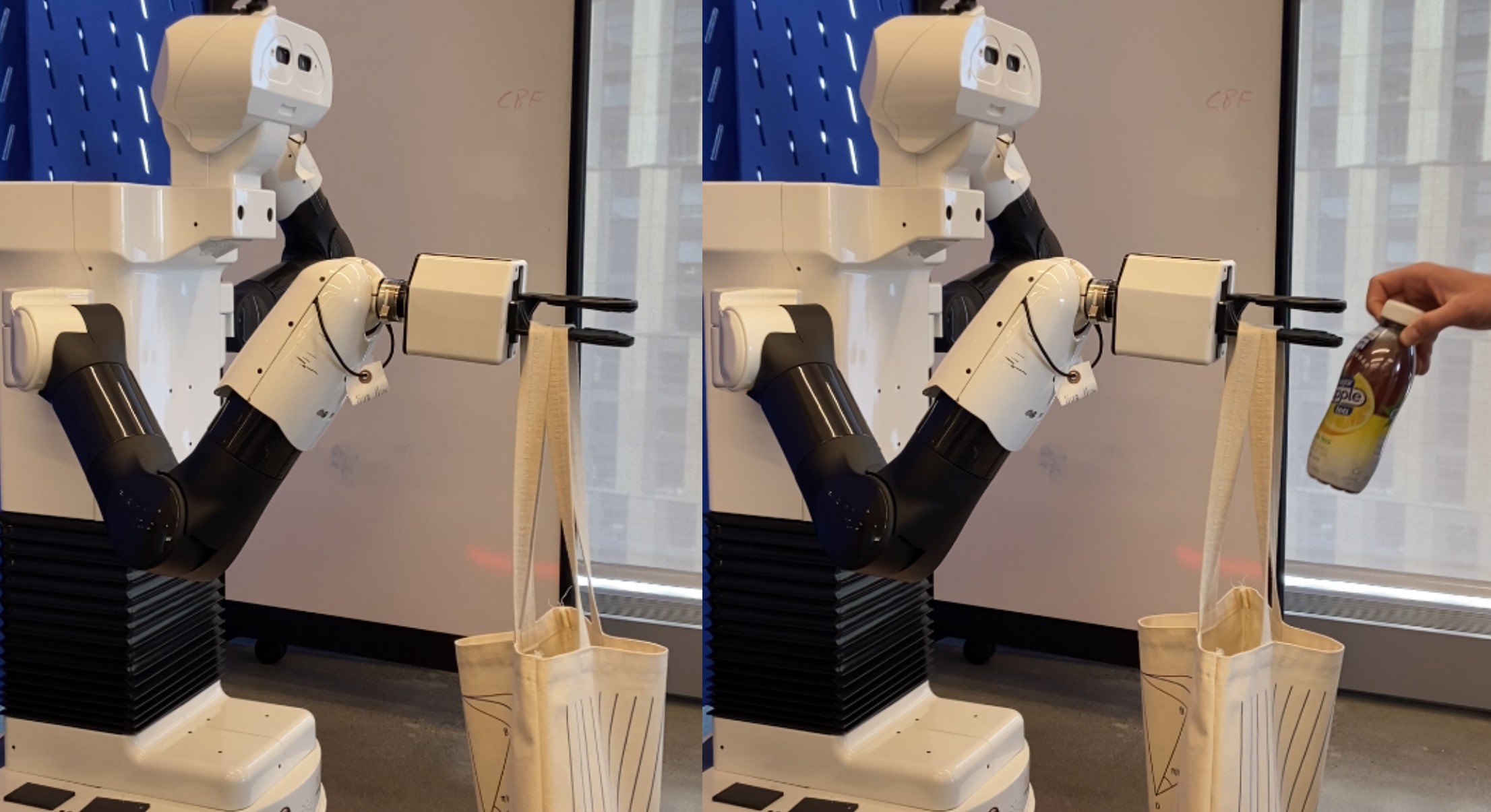}
    \caption{The test setup used to assess the CBF controller's response to a load exceeding its linear load limit. First, a \SI{20}{N} test load is attached to the arm, inducing a force primarily along the robot's $-z$ axis; then, a \SI{5}{N} mass is added to the test load. The additional load approaches the force limit of the CBF controller, prompting it to respond.}
    \label{fig:linear_test_setup}
\end{figure}

The results of this test are shown in Figs.~\ref{fig:linear_test_results} and~\ref{fig:linear_test_comparison} (and in our supplementary video): as intended, the robot does not move when the \SI{20}{N} load is applied, but when the \SI{25}{N} limit is exceeded the robot moves to lower the test mass. Since the test mass is hanging from the robot, the robot is not able to immediately reduce the force (as we assumed with the linear stiffness model used in Theorem~\ref{thm:cbf}), but it still reacts appropriately. We compare our CBF controller with a traditional stiffness controller, as shown in Fig.~\ref{fig:linear_test_comparison} and in the supplementary video.

Our controller is able to support the \SI{20}{N} load but moves in response to a load that exceeds the safety threshold (i.e. it respects the safety constraints but is able to track the desired position with a reduced load). In contrast, even though we tuned the parameters of the stiffness controller to be as stiff as possible (without causing instability due to measurement noise and the bandwidth of the controller), it was unable to support even the base load of \SI{20}{N}. This comparison highlights a significant advantage of our approach over stiffness control: it is in general difficult to balance safety (i.e. not applying excessive force) with performance (in this case, goal tracking under a lighter load) without complicated system identification modules, and even then the process requires significant manual tuning. In contrast, we are able to proceed nearly directly from a specification (e.g. ``track this position but avoid applying any force in excess of \SI{25}{N}'') to a controller that satisfies that specification, simplifying the process of developing a safe controller for human-robot interaction applications.

\begin{figure}[tb]
    \centering
    \includegraphics[width=\linewidth]{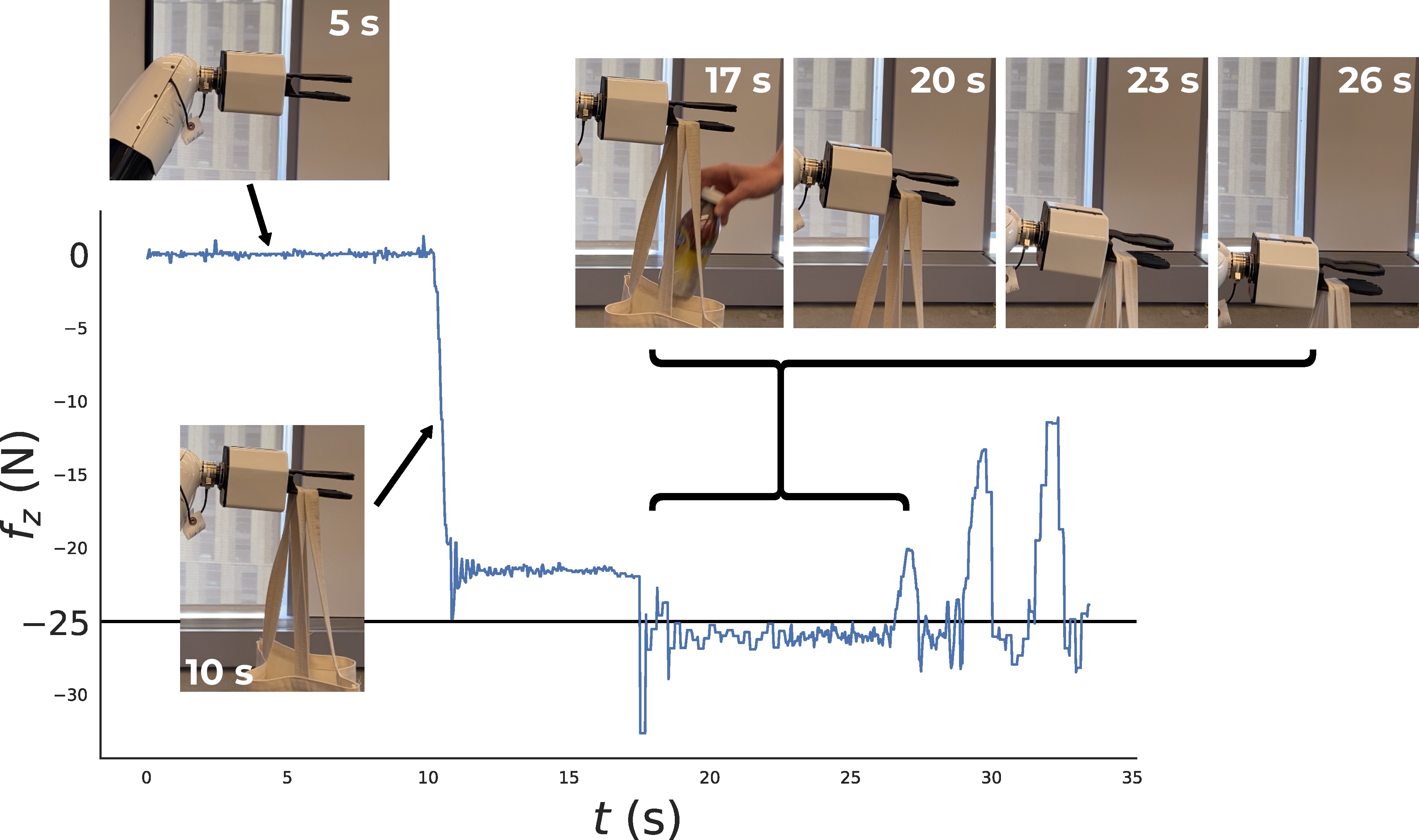}
    \caption{The measured net force along the robot's $z$ axis, showing our controller's response to applied \SI{20}{N} (applied at \SI{10}{s}) and \SI{25}{N} (at \SI{17}{s}) loads. Since we set the safety limit in the $z$ direction at \SI{25}{N}, the robot does not move until the \SI{25}{N} load is applied; once the additional load is applied it smoothly drops the load. The spikes occurring at $>$\SI{26}{s} are caused by the test mass touching the ground, de-loading the robot and causing it to try to move back towards its setpoint.}
    \label{fig:linear_test_results}
\end{figure}

\begin{figure*}[tb]
    \centering
    \includegraphics[width=\linewidth]{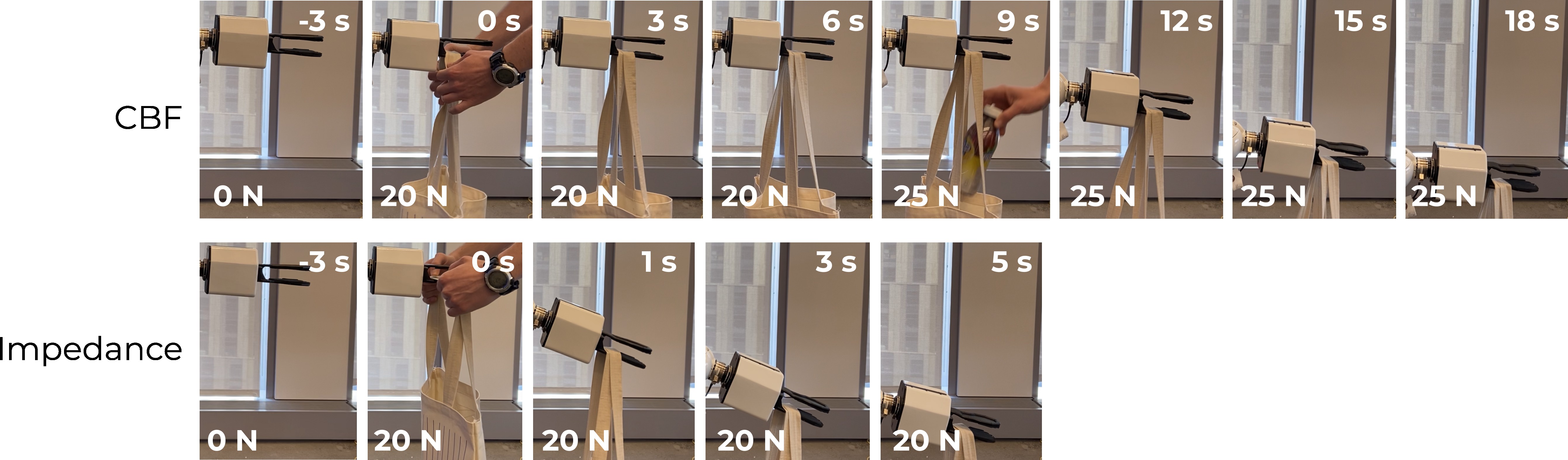}
    \caption{A comparison of our CBF controller and a traditional stiffness controller in the test described in Fig.~\ref{fig:linear_test_setup}. Our controller maintains its position when the \SI{20}{N} force is applied and only moves when the \SI{25}{N} force is applied to exceed its prescribed safety limit. In contrast, the stiffness controller fails to support the \SI{20}{N} load, despite the stiffness being tuned as high as possible without becoming unstable (\SI{40}{N/m}).}
    \label{fig:linear_test_comparison}
\end{figure*}

\subsection{Collaborative carrying task}

We can now apply our CBF force-feedback controller to the motivating human-robot collaboration example from Fig.~\ref{fig:task}. In this task, the human and the robot must collaboratively lift a bulky object; however, to avoid pushing the human off balance or causing their grasp to fail, we wish to impose safety limits on the forces and torques that the robot can apply to the carried object. We set the safety limits as follows: the force in the $x$ and $y$ directions (in the robot's frame) must not exceed \SI{10}{N}, the force in the $z$ direction (less 50\% of the carried object's weight) must not exceed \SI{10}{N}, the torques about the $y$ and $z$ axes must not exceed \SI{3}{Nm}, and the torque around the $x$ axis must not exceed \SI{0.5}{Nm}. We use CBF parameters $\lambda = 10$, $k = 1$, $\alpha = 1$ for forces, and $\alpha = 10$ for torques. To add an additional layer of responsiveness, we add a simple inverse-dynamics controller to control the movement of the robot's base to that of a free-floating rigid body. This allows the human to not only guide the robot's end-effector but also move the robot around the room. This inverse-dynamics controller only affects the motion of the base; the arm and torso motions are driven by the CBF controller, and we include a video of our controller operating without any base motion in our supplementary video.

Using this capability, we simulate a basic construction task where the human and the robot must collaboratively position a panel of foam insulation. We use an ArUco marker~\cite{aruco} to localize the panel and establish a grasp, but we cover this marker once the robot has grasped the panel (it is not used while carrying; only force and joint position measurements are used). We also program the robot to recognize when the human has rotated the panel by 90 degrees and lowered it to the ground; this is the signal to release the board and end the test.

\begin{figure*}[tb]
    \centering
    \includegraphics[width=\linewidth]{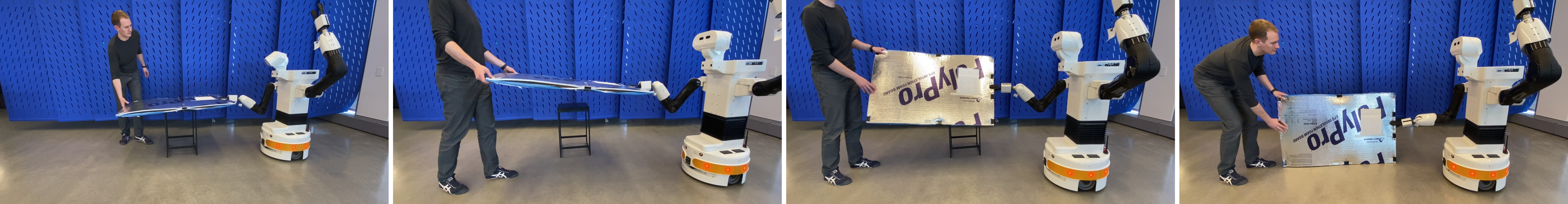}
    \caption{Using our CBF-based force-feedback controller to control the right arm and torso of the TIAGo robot (the base is controlled separately with a velocity controller that makes the base behave like a free-floating, damped rigid body). Our supplementary video shows the robot responding to push/pull, side-to-side, and twisting motions from the human; our CBF controller allows the robot to respond without applying too much force to the carried object, which could cause either the human's or the robot's grasp to fail.}
    \label{fig:carrying_snapshots}
\end{figure*}

As shown in Fig.~\ref{fig:carrying_snapshots} and in our supplementary video, our controller enables the robot to successfully perform this collaborative carrying task, allowing the human to guide the robot through back-and-forth, side-to-side, and twisting motions of the carried panel. We found that for the majority of the task, the robot's end effector was pushed away from the desired target pose, indicating that the CBF constraints were active when the human was guiding the motion of the carried object.

\section{Limitations \& Conclusion}

In our experiments, we found that the human's qualitative experience of working with our CBF-based controller was sensitive to the bandwidth of the controller and underlying hardware. Our experiments on the TIAGo robot were limited by the bandwidth of the force-torque sensor (which publishes measurements at \SI{30}{Hz}) and the whole-body controller that we used to implement velocity control of the robot's end effector. For safety reasons, the speed of the TIAGo's end effector is limited when using the provided whole-body controller, which means that human testers reported that the robot sometimes felt ``slow'' or ``sluggish'' to work with. We anticipate that the user experience could be improved by using more performant hardware for both sensing and actuation; we do not believe that the CBF-based controller will cause issues in this domain, as previous work has shown that control frequencies of \SI{>100}{Hz} are possible with CBF-based controllers (even with un-optimized Python implementations~\cite{dawson2021safe}).

Despite these limitations, we present an easy-to-implement and simple-to-tune force feedback controller that enables safe, responsive human-robot interaction. Our controller uses control barrier functions to ensure that the robot does not exert too much force on its environment while still allowing useful contact to occur. Moreover, the use of our controller does not require any system identification of either the robot or its environment, and our controller does not require any sensor feedback beyond joint positions and wrench measurements at the end effector (both of which are commonly available for commercial robotic arms). We use our controller to accomplish a simulated human-robot collaborative construction task, cooperatively maneuvering a large bulky object without subjecting either the human or the carried object to excessive force. We hope that our controller will help enable solutions to a wide range of similar problems where safe, responsive force-feedback can help robots collaborate more effectively with humans.

\section*{Acknowledgments}

We would like to sincerely thank the MIT-IBM Watson AI Lab team for providing access to and support for the TIAGo robot, particularly Jason Phillips, Luke Inglis, Mark Weber, and David Cox.

\bibliographystyle{IEEEtran}
\bibliography{main}

\end{document}